\newcommand{\minmax}{$\min\max$}
\newcommand{\changed}{}
\newcommand{\ie}{\emph{i.e.,}}
\DeclareMathOperator{\Tr}{tr}
\newtheorem{theorem}{Theorem}
\title{\LARGE\bf Non-Myopic Target Tracking Strategies for State-Dependent Noise}%
\author{Zhongshun Zhang and Pratap Tokekar%
\thanks{The authors are with the Department of Electrical \& Computer Engineering, Virginia Tech, USA. \texttt{\small \{zszhang,tokekar\}@vt.edu}.}%
\thanks{This material is based upon work supported by the National Science Foundation under Grant \#1566247.}}
\begin{document}

\maketitle
\thispagestyle{empty}
\pagestyle{empty}

\begin{abstract}
We study the problem of devising a closed-loop strategy to control the position of a robot that is tracking a possibly moving target. The robot is capable of obtaining noisy measurements of the target's position. The key idea in active target tracking is to choose control laws that drive the robot to measurement locations that will reduce the uncertainty in the target's position.
The challenge is that measurement uncertainty often is a function of the (unknown) relative positions of the target and the robot. Consequently, a closed-loop control policy is desired which can map the current estimate of the target's position to an optimal control law for the robot. 

Our main contribution is to devise a closed-loop control policy for 
target tracking that plans for a sequence of control actions, instead of acting greedily. We consider scenarios where the noise in measurement is a function of the state of the target. We seek to minimize the maximum uncertainty (trace of the posterior covariance matrix) over all possible measurements. We exploit the structural properties of a Kalman Filter to build a policy tree that is orders of magnitude smaller than naive enumeration while still preserving optimality guarantees. We show how to obtain even more computational savings by relaxing the optimality guarantees. The resulting algorithms are evaluated through simulations.

\end{abstract}

\section{INTRODUCTION}

Tracking a moving, possibly adversarial target is a fundamental problem in robotics and has long been a subject of study~\cite{bar2004estimation,li2003survey,li2010survey,li2001survey,li2002survey,li2005survey}. Target tracking finds applications in many areas such as surveillance~\cite{rao1993fully}, telepresence~\cite{karnad2012modeling}, assisted living~\cite{montemerlo2002experiences}, and habitat monitoring~\cite{isler2015finding,tokekar2013tracking}. Target tracking refers to broadly two classes of problems: (i) estimating the position of the target using noisy sensor measurements; and (ii) actively controlling the sensor position to improve the performance of the estimator. The second problem is distinguished as \emph{active} target tracking and is the subject of study of this paper. 

The main challenge in active target tracking is that the value of future measurement locations can be a function of the unknown target state. Take as example, a simple instance of estimating the unknown position of a stationary target where the measurement noise is a function of the distance between the robot and the target. If the true location of the target were known, the robot would always choose a control sequence that drives it closer to the target. Since, in practice, the true target location is unknown, we cannot determine such a control sequence exactly. A possible strategy in this case would be to plan with respect to the probability distribution of the target. However, the probability distribution itself will evolve as a function of the actual measurement values. This becomes even more challenging if the target is mobile. We use a \changed{Kalman Filter (KF)} to estimate the state of a moving target with a possibly \changed{state-dependent} measurement model where the measurement noise is a function of the distance between the robot and the target.

When planning non-myopically (\ie{} for multiple steps in the future), one can enumerate all possible future measurements in the form of a tree. In particular, a minimax tree can be used to find the optimal (in the \emph{min-max} sense) control policy for actively tracking a target~\cite{tokekar2011active}. The size of the \minmax{} tree grows exponentially with the time horizon. The tree can be pruned using $\alpha-\beta$ pruning~\cite{russell1995modern}. Our main contribution is to show how the properties of a KF can be exploited to prune a larger number of nodes without losing optimality. In doing so, we extend the pruning techniques first proposed by Vitus et al.~\cite{vitus2012efficient} for linear systems. Using a \minmax{} tree, we generalize these results to a \changed{state-dependent, time-variant} dynamical systems. Our pruning techniques allow us to trade-off the size of the tree (equivalently, computation time) with the performance guarantees of the algorithm. We demonstrate this effect in simulations. 

The rest of the paper is organized as follows. We start with the related work in Section~\ref{sec:relwork}. The problem formulation is presented in Section~\ref{sec:probform}. Our main algorithm is presented in Section~\ref{sec:algo}. We validate the algorithm through simulations described in Section~\ref{sec:sims}. Finally, we conclude with a brief discussion of future work in Section~\ref{sec:conc}.


\section{Related Work} \label{sec:relwork}
The target tracking problem has been studied under various settings. Bar-Shalom et al.~\cite{bar2004estimation} present many of the commonly-used estimation techniques in target tracking. The five-part survey by Li and Jilkov~\cite{li2003survey,li2010survey,li2001survey,li2002survey,li2005survey} covers commonly-used control and maneuvering techniques for active target tracking. In the rest of the section, we discuss works most closely related to our formulation.

Vitus et al.~\cite{vitus2012efficient} presented an algorithm that computes the optimal scheduling of measurements for a linear dynamical system. Their formulation does not directly model a target tracking problem. Instead, the goal is to track a linear dynamical system using a set of sensors such that one sensor can be activated at any time instance. The posterior covariance in estimating a linear system in a Kalman filter depends on the prior covariance and sensor variance but not on the actual measurement values (unlike the case in non-linear systems). Thus, one can build a search tree enumerating all possible sensor selections and choosing the one that minimizes the final covariance. The main contribution of Vitus et al. was to present a pruning technique to reduce the size of the tree while still preserving optimality.

Atanasov et al.~\cite{atanasov2014information} extended this result to active target tracking with a single robot. A major contribution was to show that robot trajectories that are nearby in space can be pruned away (under certain conditions), leading to further computational savings. This was based on a linear system assumption. In this paper, we build on these works and make progress towards generalizing the solution for \changed{state-dependent observation} systems.

A major bottleneck for planning for \changed{state-dependent observation} systems is that the future covariance is no longer independent of the actual measurement values, as was the case in linear \changed{state-independent} systems. 
 The covariance update equations use the \changed{linear} models and as such will depend on the state estimate and the measurements. Thus, the search tree will have to include all possible measurement values and not just all possible measurement locations. Furthermore, finding an optimal path is no longer sufficient. Instead one must find an optimal policy that prescribes the optimal set of actions for all possible measurements. We show how to use a \minmax{} tree to find such an optimal policy while at the same time leveraging the computational savings that hold for the linear case.


\section{Problem Formulation} \label{sec:probform}
We assume that the position of the robot is known accurately using onboard sensors (\emph{e.g.}, GPS, laser, IMU, cameras). The motion model of the robot is given by:
\begin{equation}
  \label{eq1}
  X_r(t+1)=f(X_r(t),u(t))
\end{equation}
where $X_r=[x_r(t), y_r(t)]^T\in\mathbb{R}^2$ is the position of robot and $u(t)\in \mathcal{U} $ is the control input at time $t$. $\mathcal{U} $ is a finite space of control inputs. We assume there are $n$ actions available as control inputs at any time:
$$\mathcal{U}(t)=\{u_1(t),u_2(t),\cdots, u_n(t)\}.$$

The robot is mounted with a sensor that is capable of obtaining a measurement of the target. We assume that the target's motion model is given by:
\begin{equation}
  \label{eq2}
  X_o(t+1)=C_tX_o(t)+v(t)\\
\end{equation}   
where, $ X_o(t)=[    x_o(t),  y_o(t)]^T   $ is the 2-dimensional position of the target and $v(t)$ is Gaussian noise of known covariance.

The task of the robot is to track the target using its noisy measurements. The measurements, $Z(t)$, can be a nonlinear function of the states of the target and robot:
\begin{equation}
  \label{eq3}
 \changed{  Z(t)= H(X_r(t)) X_o(t)  +\omega(X_r(t), X_o(t))}
\end{equation}
The measurement noise, $\omega(t)$, is a Gaussian whose variance depends on the distance between the robot and the target:
\begin{equation}
    \label{eq6}
    \changed{ \omega(X_r(t), X_o(t))\sim N\left(0, \delta_1^2 +\delta_2^2d(X_r(t),X_o(t)) \right)}
  \end{equation}
where,\\
\leftline{$d(X_r(t),X_o(t))=$}
\begin{equation}
\left\{
\begin{aligned}
\mathcal{C},\qquad \qquad &\quad ||X_r(t)-X_o(t)||_2>\mathcal{B} \\ 
    \frac{\mathcal{C}||X_r(t)-X_o(t)||_2}{\mathcal{B}} , &\quad ||X_r(t)-X_o(t)||_2 \le \mathcal{B} \\  
\end{aligned}
\right.
\label{eqn:cov}
\end{equation}

When the true distance between the robot and target is within $\mathcal{B}$, we assume that measurement noise is proportional to the true distance. When the distance is greater than $\mathcal{B}$), the variance is assumed to be a constant maximum value of $\mathcal{C}$. 

The estimated position and the covariance matrix of the target at time $t$, $ \hat{X}_o(t)$ and  $\hat{\Sigma}_o(t)$, is given by the Kalman Filter. The uncertainty in the estimate of the target's position is measured by the trace of the covariance matrix. The problem considered in this paper can be formally stated as follows.\\
%

\noindent\textbf{Problem:} Given an initial robot position $X_r(0)$ and an initial target estimate $[\hat{X}_o(0), \hat{\Sigma}_o(0)]$, find a sequence of control laws for the robot, $\sigma=u_0, u_1,\cdots,  u_{T} \in \mathcal{U}^T$ from time $t=0$ to $t=T$ to minimize trace of the covariance in the target's estimate at time $t=T$. That is, 

\begin{equation}
  \label{eq5}
 \min_{u(t)} \max_{z(t)}  \quad \text{tr}(\Sigma_T) 
\end{equation}
such that,
 $$      \qquad  \Sigma_{t+1}=\rho_{{t+1}}(\Sigma_{t}),\quad t=0,1,\cdots, T-1 $$ 
where $\rho_{t}(\cdot)$ is the Kalman Riccati equation~\cite{kumar1986stochastic}. 

The Riccati equation, $\rho(\cdot)$, maps the current covariance matrix $\hat\Sigma_k$, under a new measurement to the covariance matrix at the next time step,
\begin{align}
 \rho_i(\hat{\Sigma}_k)=&C_k\hat{\Sigma}_kC_k^T-C_k\hat{\Sigma}_kH_{k}^T(H_{k}\hat{\Sigma}_kH_{k}^T+\hat{\Sigma}_{w})^{-1}H_{k}\hat{\Sigma}_kC_k^T\notag\\
 &+\Sigma_v  \label{eq6}
\end{align}
 \changed{where $H_k$ is the matrix of the measurement equation computed around ${X}_r(k)$ at time $k$.}
$\Sigma_{w}$ is the covariance of the measurement noise given in Equations~(\ref{eq3})--(\ref{eqn:cov}). 

The true position of the target is unknown making it impossible to determine $\Sigma_{w}$ exactly. Consequently, we use an estimate of $\Sigma_{w}$ using the estimated  target's position $\hat{X}_o(k)$. Therefore, the optimal solution for the problem defined will be a closed-loop policy that should map the estimated target's position to the optimal control action for the robot.
%


\section{Closed-Loop Control Policy} \label{sec:algo}
References~\cite{vitus2012efficient, atanasov2014information} solve a similar problem but for a linear Gaussian system. The linearity assumption makes the Riccati equation independent of the position of the target. Consequently, they show an open loop policy can determine the optimal control sequence for the robot. In our case,  the optimal control policy for this \changed{state-dependent} observation model case will be an adaptive (closed-loop) control policy. However, this generalization comes at the expense of discretization of the set of possible target measurements. Specifically, we assume that the measurement at any time step is chosen from one of $k$ tuples of candidate measurements. That is,
$$z(t) \in \{z_1(t),z_2(t),\cdots, z_k(t)\}.$$

These candidate measurements can be obtained by, for example, sampling from the continuous distribution of zero mean sensor noise around the current estimate of the target.  For example, we can choose $k$ candidate measurements from the data within 3 standard deviation of the mean value, which contain $99.7\%$ of the possible measurements.
\begin{figure}[H] \label{c2dGuassian}
  \centering
  \includegraphics[height=4cm  ]{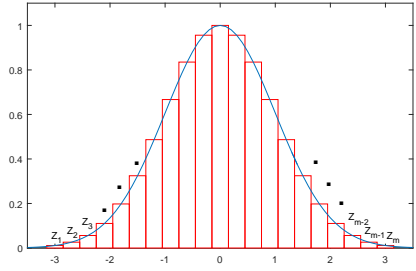}
  \caption{Obtaining a finite set of candidate measurements by discretizing the Gaussian distribution of the measurement noise.}
  \label{Minimaxtree}
\end{figure}



\begin{figure*}[h] \label{Minimaxtree}
  \centering
  \includegraphics[height=3.5cm  ]{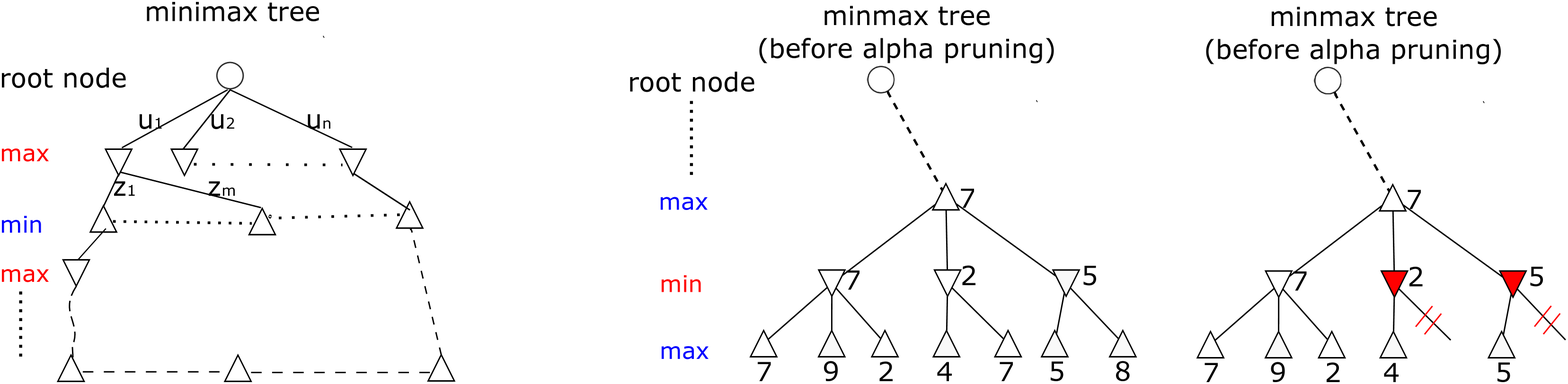}
  \caption{A minimax tree with alpha pruning. $\bigtriangledown$  and $\bigtriangleup$  are nodes in which we compute the minimum or maximum value of its children. The value at the leaf nodes equals the $\mathrm{tr}(\Sigma_k) $). $\bigtriangledown$  and $\bigtriangleup$ nodes represent control and measurement nodes, respectively. The filled $\bigtriangledown$ are pruned by alpha pruning.}
  \label{Minimaxtree}
\end{figure*}

\subsection{Optimal decisions: The minimax algorithm}
Minimizing the maximum trace can be thought of as playing a game against an adversary: The robot chooses the control actions to minimize the trace whereas the adversary (\ie{} nature) chooses measurement noise to maximize the trace. By optimizing the $\min\max$ trace, the robot determines the best conservative policy.

We can find this optimal strategy by building a minimax tree. This tree enumerates all possible control laws and all possible measurements that the robot can obtain. A node on the $k$th level of the tree stores the position of the robot, $X_r(k)$, the estimated position of the target, $\hat{X}_o(k)$, and the covariance matrix $\hat{\Sigma}_k$. Each node at an odd level has one branch per control action. Each node at an even level has one branch per candidate measurement. The robot's state and the target's estimate are updated appropriately along the control and measurement branches using the state transition equation (\ref{eq1}) and the \changed{Kalman filter} update equation, respectively. The minimax value is computed at the leaf nodes and is equal to the trace of the covariance matrix at that node. These values are propagated upwards to compute the optimal strategy. Figure~\ref{Minimaxtree} shows an example. 

\begin{algorithm}
    \SetAlgoLined
    $S_0\leftarrow\left\{(X_r(o), \Sigma_0)\right\}, \quad S_t\leftarrow\phi$ for $t=1,....,T$\\
    $\mathcal{Z}=\left\{z_1,z_2,..., z_k\right\}$\\
    $\mathbf{for} \quad\ t=1:T \quad\mathbf{do}$   \\
      \quad $\mathbf{if}$ \quad NODE STATE ($\min$)\\
         \qquad $\mathbf{for \quad all} \quad\ (X_r(t-1), \Sigma(t-1))\in S_{t-1}\quad\mathbf{do}$   \\
         \qquad \qquad$\mathbf{for\quad all}\quad u_i\in \mathcal{U}$\\
         \qquad \qquad \qquad $X_r(t)\leftarrow f(X_r(t-1),u_i)$\\
        \qquad \qquad \qquad $S_t\leftarrow S_t \bigcup \left\{  (X_r(t), \Sigma(t-1))\right\}$
        
              \quad $\mathbf{else \quad if}$ \quad NODE STATE ($\max$)\\
         \qquad $\mathbf{for \quad all} \quad\ (X_r(t), \Sigma(t-1))\in S_{t}\quad\mathbf{do}$   \\
         \qquad \qquad$\mathbf{for\quad all}\quad z_i\in \mathcal{Z}$\\
         \qquad \qquad \qquad $\Sigma(t) \leftarrow \rho (X_r(t), z_i, \Sigma(t-1))$\\
        \qquad \qquad \qquad $S_t\leftarrow S_t \bigcup \left\{  (X_r(t), \Sigma(t))\right\}$\\

          $\mathbf{for} \qquad t=1:T$  do\\
        
       \quad $\mathbf{if}$ TERMINAL-TEST (Max) \\
        \qquad for each $S_t$ do\\
        \qquad  \qquad$\mathcal{V} \leftarrow (max\Tr(\Sigma_i(t)), S_t(i))$\\
        \qquad  \qquad$t \leftarrow t-1 $ \\
        
         \quad       $\mathbf{if}$ TERMINAL-TEST (Min) \\
        \qquad for each $S_t$ do\\
        \qquad  \qquad$\mathcal{V} \leftarrow (minimax\Tr(\Sigma_i(t)), S_t(i))$\\
        \qquad  \qquad$t \leftarrow t-1 $ \\
        
       $\mathbf{return} \quad\mathcal{V}$
    \caption{The minimax algorithms}
    \label{minimax} 
\end{algorithm}

\subsection{Alpha Pruning}
The number of nodes in a naive minmax tree is exponential in the depth of the tree (\ie{} in the planning horizon). As a first step in reducing the size of the tree, we implement $\alpha$ pruning~\cite{russell1995modern}. The main idea in $\alpha$ pruning is that if we have explored a part of the tree, we have an upper bound on the optimal minimax value. When exploring a new node, $n_i$, if we find that the minimax value of the subtree rooted at $n_i$ is greater than the upper bound found, that subtree does not need to be explored further. This is because an optimal strategy will never prefer a strategy that passes through $n_i$ since there exists a better control policy in another part of the tree. Note that $n_i$ must be a control node. Measurement nodes cannot be pruned since the robot has no control over the actual measurement values. The pruning algorithm is based on the general alpha-beta pruning~\cite{russell1995modern} used in adversarial games. Fig.~\ref{Minimaxtree} shows a simple example of policy tree built using alpha pruning.

\subsection{Algebraic Redundancy Pruning}    


In addition to alpha pruning, we extend the ideas presented by~\cite{vitus2012efficient} for linear systems and extend them to get even further computational savings. If there are multiple nodes at the same level with the same $X_r(t)$ values but different target estimates, we can prune one of the nodes if it is clearly ``dominated'' by the others. The notion of domination encodes the property that that particular node will never be a part of an optimal (minmax) policy. Reference~\cite{vitus2012efficient} formalized the notion of domination by defining an algebraic redundancy constraint. We adapt this result for our notation as follows:

\begin{theorem}
[Algebraic Redundancy~\cite{vitus2012efficient}]
Let $\mathcal{H} = \{(X^j_r(t),\Sigma^j_t)\}$ be a set of $n$ nodes at the same level of the tree. If there exist non-negative constants $\alpha_1, \alpha_2, \dots, \alpha_k$ such that,
$$   \Sigma^p_t\succeq \sum^k_{i=1}{\alpha_j\Sigma^j_t}\quad \text{and} \quad \sum^k_{i=1}{\alpha_i} = 1$$
then the node $\left(X^p_r(t),\Sigma^p_t\right)$ is regarded as algebraically redundant\footnote{$M \succeq  N$ represents that $M-N$ is positive semi-definite.} with respect to $\mathcal{H}\setminus\{(X^p_r(t),\Sigma^p_t)\}$ and $(X^p_r(t),\Sigma^p_t)$ and all of its descendants can be pruned without eliminating the optimal solution from the tree.
\label{thm:fromvitus}
\end{theorem}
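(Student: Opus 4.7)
The plan is to show that, for any policy (control sequence together with an adversarial choice of measurements) that is executable from the dominated node $(X^p_r(t),\Sigma^p_t)$, there is at least one sibling in $\mathcal{H}\setminus\{(X^p_r(t),\Sigma^p_t)\}$ from which the analogous policy yields a terminal covariance with no larger trace. That alone shows the min-max value of the subtree rooted at the dominated node is no better than the min-max value of at least one sibling subtree, so pruning that node (and its descendants) cannot discard the optimal policy.

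The technical core is two standard structural properties of the Kalman Riccati map $\rho$ in Equation~(\ref{eq6}), viewed as a function of $\Sigma$ with $X_r$, $H$, $C$, $\Sigma_v$, $\Sigma_w$ fixed: (i) monotonicity in the L\"owner order, $\Sigma_1 \succeq \Sigma_2$ implies $\rho(\Sigma_1) \succeq \rho(\Sigma_2)$, and (ii) concavity, $\rho\bigl(\sum_i \alpha_i \Sigma_i\bigr) \succeq \sum_i \alpha_i \rho(\Sigma_i)$ for nonnegative $\alpha_i$ with $\sum_i \alpha_i = 1$. Both follow from the Woodbury rewrite $\rho(\Sigma) = C(\Sigma^{-1} + H^T\Sigma_w^{-1}H)^{-1}C^T + \Sigma_v$, together with the fact that $X\mapsto X^{-1}$ is operator-convex on the positive cone, so its composition with the affine PSD map $\Sigma\mapsto \Sigma^{-1}+H^T\Sigma_w^{-1}H$ inverted gives an operator-concave dependence on $\Sigma$; conjugation by $C$ and translation by $\Sigma_v$ preserve both properties.

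With those two lemmas, I would induct on the residual horizon. Fix any control sequence $u_t,\ldots,u_{T-1}$ and any measurement sequence $z_t,\ldots,z_{T-1}$ drawn from the candidate set, and propagate them identically from each node of $\mathcal{H}$. The hypothesis $\Sigma^p_t \succeq \sum_i \alpha_i \Sigma^i_t$ combined with monotonicity followed by concavity yields
\[
\Sigma^p_{t+1} = \rho(\Sigma^p_t) \;\succeq\; \rho\Bigl(\sum_i \alpha_i \Sigma^i_t\Bigr) \;\succeq\; \sum_i \alpha_i \rho(\Sigma^i_t) \;=\; \sum_i \alpha_i \Sigma^i_{t+1}.
\]
Iterating to time $T$ and taking trace gives $\Tr(\Sigma^p_T) \geq \sum_i \alpha_i \Tr(\Sigma^i_T)$, so since the $\alpha_i$ are nonnegative and sum to one, at least one sibling index $j$ satisfies $\Tr(\Sigma^p_T) \geq \Tr(\Sigma^j_T)$. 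Quantifying over all future controls and all adversarial measurements, the dominated node's min-max value is at least as large as the best sibling's min-max value, completing the safety-of-pruning argument.

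The main obstacle I anticipate is the state-dependent noise. In the linear setting of Vitus et al., the Riccati map at a given level is the \emph{same} function of $\Sigma$ for every node in $\mathcal{H}$, so the chain $\rho(\Sigma^p_t)\succeq\rho(\sum_i \alpha_i \Sigma^i_t)\succeq\sum_i \alpha_i \rho(\Sigma^i_t)$ is literal. Here, however, $\Sigma_w$ (and, under linearization, $H$) depends on $\hat{X}_o$, which generally differs across the sibling nodes even after the same control-measurement pair is applied, so the induction step compares $\rho^{(p)}(\Sigma^p_t)$ with $\sum_i \alpha_i \rho^{(i)}(\Sigma^i_t)$ using \emph{different} Riccati operators on each side. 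Closing that gap is the delicate step: it requires either restricting the theorem to the regime where $\Sigma_w$ is evaluated at a common reference estimate (for instance, the one carried along the branch being considered for pruning), or adding a perturbation term bounding how much $\rho$ varies with $\hat{X}_o$ and verifying that this term is dominated by the PSD slack in the hypothesis. This is the point at which the linear-case proof of Vitus et al.\ cannot be transplanted verbatim and where the argument must be adapted to the state-dependent observation model.
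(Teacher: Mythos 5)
Your argument is correct for the theorem as stated, but note that the paper itself does not prove this result in-text: Theorem~\ref{thm:fromvitus} is imported from Vitus et al.\ with only the one-sentence summary that the trace of any successor of the dominated node cannot beat that of some retained sibling, and your proof is essentially a reconstruction of that cited argument (monotonicity plus concavity of the Riccati map, applied level by level to propagate $\Sigma^p_t \succeq \sum_i \alpha_i \Sigma^i_t$ forward, then a trace/convex-combination step to extract a single dominating sibling). Where the comparison becomes interesting is against the paper's own proof of its state-dependent analogue (Theorem~\ref{thm:main} and the monotonicity lemma in the appendix): there the authors avoid concavity altogether, first reducing the $N$-node convex combination to a single node by assuming a ``minimum'' covariance $\Sigma^{B'}$ among the siblings, and then running an induction that uses only a monotonicity result for the state-dependent Riccati map in which the noise discrepancy is absorbed by the additive slack $K(\delta_1^2+\delta_2^2\mathcal{C})$. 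Your concavity route handles the convex combination directly and so does not need the ``minimal sibling'' reduction (which is delicate, since the positive-semidefinite order is only partial), but it relies on all siblings sharing the same future Riccati operators, which is exactly the point you flag at the end: in the state-dependent setting $H$ and $\Sigma_w$ differ across branches, and the paper's fix is precisely the extra hypotheses of Theorem~\ref{thm:main} (identical robot states, least common ancestor a control node, and the inflated noise term), so your diagnosis of where the linear proof breaks is consistent with the paper. One small caveat: your justification of operator concavity of $\Sigma\mapsto(\Sigma^{-1}+H^T\Sigma_w^{-1}H)^{-1}$ via convexity of inversion is stated loosely and assumes invertibility; the cleaner standard route is the representation of the Riccati update as a pointwise minimum over Kalman gains of functions affine in $\Sigma$, which gives monotonicity and concavity simultaneously and without invertibility assumptions.
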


They prove that the trace of any successor of $(X^p_r(t),\Sigma^p(t))$ cannot be lower than one of the successors of $\mathcal{H}\setminus\{(X^p_r(t),\Sigma^p(t))\}$. Our main insight is that a similar redundancy constrained can be defined for the \changed{state-dependent} case with suitable additional constraints as described below.

\begin{theorem}[State-dependent Algebraic Redundancy] Let $\mathcal{H} = \{(X^i_r(t),\hat{X}^i_o(t),\hat{\Sigma}^i_t)\}$ be a set of $N$ nodes at the same level. If there exists a node $A=(X^A_r(t),\hat{X}^A_o(t),\hat{\Sigma}^A_t)$ at the same level such that:
\begin{enumerate}
\item the robot states are identical, \ie{}  $X^A_r(t)=X^i_r(t)$ for all $i$ in $\mathcal{H}$;
\item the least common ancestor of $A$ with any other node in $\mathcal{H}$ is a control (\ie{} min) node;
\item there exist non-negative $\alpha_i$ such that for any $K$:
\begin{equation}
 \changed{H_t\Sigma_t^AH_t^T  \succeq  \sum^{N}_{i=1}\alpha_i \left[ H_t\Sigma_t^iH_t^T + K\left(\delta_1^2 +\delta_2^2\mathcal{C}\right)\right]\\}
\end{equation}
\end{enumerate}
where, $\sum^{N}_{i=1}\alpha_i =1$, then there exists a node in $\mathcal{H}$, say $B$, such that:
\begin{equation*}
\text{tr}(\Sigma^A_{t+K}) \ge \text{tr}(\Sigma^B_{t+K}).
\end{equation*}
That is, the node $A$ can be pruned from the minimax tree without eliminating the optimal policy.
\label{thm:main}
\end{theorem}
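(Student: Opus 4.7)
The plan is to adapt the algebraic redundancy argument of Vitus et al.\ (Theorem~\ref{thm:fromvitus}) to the state-dependent case by treating the extra $K(\delta_1^2+\delta_2^2\mathcal{C})$ term in Condition~3 as a \emph{buffer} that is consumed at a rate of one unit per Kalman update to absorb the worst-case discrepancy in the measurement noise covariance $\Sigma_w$ across the nodes being compared.

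First, I would fix any future control-measurement trajectory of length $K$ starting at $A$ and construct a matched trajectory at each $B\in\mathcal{H}$ that uses the same control inputs. Condition~1 forces the robot state $X_r$ to evolve identically along all matched trajectories, which in turn makes the linearized measurement matrix $H_{t+k}$ agree at every future step. Condition~2 makes the adversarial (max) branches through $A$ and through any candidate $B$ independent (their least common ancestor being a min node), so it suffices to dominate the subtree value at $A$ by the subtree value at \emph{some} single $B$, not by every $B$ simultaneously.

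The core of the argument is an induction on $k=0,\dots,K$ with the invariant
\begin{equation*}
H_{t+k}\Sigma^A_{t+k}H_{t+k}^T \;\succeq\; \sum_i \alpha_i\, H_{t+k}\Sigma^i_{t+k}H_{t+k}^T \;+\; (K-k)(\delta_1^2+\delta_2^2\mathcal{C})\, I.
\end{equation*}
The base case $k=0$ is exactly Condition~3. For the inductive step I would rewrite the Riccati recursion in information form, $\rho(\Sigma)=\Sigma_v+C(\Sigma^{-1}+H^T\Sigma_w^{-1}H)^{-1}C^T$, and apply the matrix inversion lemma together with the operator monotonicity and concavity of $\Sigma\mapsto(\Sigma^{-1}+M)^{-1}$ to show that one Kalman update preserves the projected Loewner inequality while consuming at most one unit $(\delta_1^2+\delta_2^2\mathcal{C})\,I$ of buffer, which is precisely enough to cover the worst gap $\Sigma_w^A\preceq(\delta_1^2+\delta_2^2\mathcal{C})I$ versus $\Sigma_w^i\succeq\delta_1^2 I$. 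At $k=K$ the buffer is exhausted and one obtains $H_{t+K}\Sigma^A_{t+K}H_{t+K}^T\succeq\sum_i\alpha_i H_{t+K}\Sigma^i_{t+K}H_{t+K}^T$; lifting to the unprojected covariance using the shared prediction term $C(\cdot)C^T+\Sigma_v$, taking traces, and applying a pigeonhole argument on the convex combination $\sum_i\alpha_i=1$ yields some $B\in\mathcal{H}$ with $\text{tr}(\Sigma^A_{t+K})\ge\text{tr}(\Sigma^B_{t+K})$, which is what licenses the prune at the control ancestor (Condition~2).

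The main obstacle will be the single-step propagation lemma. The Riccati operator does not commute with the projection $\Sigma\mapsto H\Sigma H^T$, and both $H_{t+k}$ and $\Sigma_w$ implicitly depend on the node's target estimate $\hat{X}_o$; bounding the per-step buffer loss uniformly over the adversary's measurement choice and over the resulting $\hat{X}_o$ updates is the delicate part. A secondary subtlety is the lift from a projected Loewner inequality on $H\Sigma H^T$ to a trace inequality on the full $\Sigma$, which will need to exploit the fact that components outside $\mathrm{range}(H^T)$ are updated identically across matched nodes via the common $C\,\cdot\,C^T+\Sigma_v$ prediction.
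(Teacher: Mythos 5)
Your proposal follows the same overall skeleton as the paper's proof --- treat the $K(\delta_1^2+\delta_2^2\mathcal{C})$ slack as a per-step budget, propagate a Loewner inequality through the Riccati recursion by induction on the horizon, and invoke a monotonicity property of the Kalman update --- but it differs in two substantive ways. First, for the one-step lemma the paper does not appeal to operator monotonicity/concavity of $\Sigma\mapsto(\Sigma^{-1}+M)^{-1}$; it proves a dedicated ``state-dependent monotonicity'' theorem (Theorem~3 in the appendix) by explicit Kalman-gain algebra, showing $\rho(\Sigma^A)-\rho(\Sigma^B)$ is a sum of PSD terms whenever $H\Sigma^A H^T+S^A\succeq H\Sigma^B H^T+S^B$ and $\Sigma^A\succeq\Sigma^B$, with $0\preceq S^A,S^B\preceq aI$ absorbing the noise discrepancy; your information-form route buys generality but you would still have to prove that single-step fact. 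Second, for the $N$-node case the paper reduces immediately to a two-node comparison by assuming, ``without loss of generality,'' a Loewner-minimal element $B'$ of $\mathcal{H}$ --- which need not exist since the PSD order is partial --- whereas you carry the convex combination through the induction (using concavity of the Riccati map) and extract $B$ only at the end by linearity of trace; that is arguably the cleaner resolution of exactly the step the paper hand-waves. Two cautions: your description of the worst-case noise gap has the roles of $\Sigma_w^A$ and $\Sigma_w^i$ reversed (the budget must cover large noise at node $i$ against small noise at $A$), and both you and the paper leave the lift from the $H$-projected inequality to the full covariance under-justified --- the paper's induction quietly uses $a(H^TH)^{-1}$ and the auxiliary hypothesis $\Sigma^A\succeq\Sigma^B$, which the theorem's condition~3 alone does not supply --- so your flagged ``delicate part'' is real and is not fully discharged in the paper either.
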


The proof is presented in the appendix.

\subsection{Sub-optimal Pruning algorithm}  
Combining alpha pruning with \changed{linear state-independent}  algebraic redundancy pruning we can reduce a significant number of branches in the search tree while still guaranteeing optimality. We can further reduce the number of branches at the expense of losing optimality. This can be achieved by relaxing alpha-pruning and algebraic redundancy constraints. We use two parameters $\epsilon_1 > 0$ and $\epsilon_2 > 0$ as relaxation parameters for alpha pruning and algebraic redundancy pruning, respectively. In each case, we bound the loss in optimality as a function of the parameters.

Specifically, while building the tree, we prune away a node if it satisfies either of the following two conditions.  When checking for alpha pruning, we prune a node if its alpha value is greater than or equal to the best minmax value found so far minus $\epsilon_1$. Similarly, we replace the constraint in Theorem~\ref{thm:main} with the following: 

\begin{equation}
 \changed{  H_t(\Sigma_t^A+\epsilon_2)H_t^T \succeq  \sum^{N}_{i=1}\alpha_i \left[ H_t\Sigma_t^iH_t^T + K\left(\delta_1^2 +\delta_2^2\mathcal{C}\right)\right]}
\end{equation}

By varying $\epsilon_1$ and $\epsilon_2$, we can vary the number of nodes in the search tree. Next we bound the resulting loss in the optimality of the algorithm.


\section{Error analysis}

In this section, we bound the value returned by the relaxed algorithm with respect to the optimal algorithm.

\begin{theorem}[$\epsilon_1$ Alpha Pruning]
Let $J_{2k}^\ast=\Tr(\hat{\Sigma}_{2k}^\ast)$ be the optimal minimax value returned by the full enumeration tree. If $J_{2k}^{\epsilon_1}=\Tr(\hat{\Sigma}_{2k}^{\epsilon_1})$ is the value returned by the $\epsilon_1$--alpha pruning algorithm, then
$0\leq J_{2k}^{\epsilon_1}-J_{2k}^\ast\leq\epsilon_1.$
\end{theorem}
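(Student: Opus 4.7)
The plan is to prove the two inequalities $0 \leq J_{2k}^{\epsilon_1} - J_{2k}^\ast$ and $J_{2k}^{\epsilon_1} - J_{2k}^\ast \leq \epsilon_1$ separately. The lower inequality follows from a monotonicity argument: the $\epsilon_1$-pruned tree is obtained from the full enumeration tree by deleting subtrees rooted at certain control (min) nodes, while every measurement (max) node retains all of its children. By backward induction from the leaves, the minimax value at any surviving node in the pruned tree can only increase relative to its value in the full tree, since restricting a min to fewer children can only raise the minimum. In particular this yields $J_{2k}^{\epsilon_1} \geq J_{2k}^\ast$.

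The upper inequality is the heart of the proof. The key invariants are (i) the running quantity $U$, the best complete-strategy trace discovered so far, is an upper bound on $J_{2k}^\ast$ that is monotonically non-increasing through the execution and equals $J_{2k}^{\epsilon_1}$ at termination; and (ii) the alpha value $\alpha(n_i)$ used by the pruning rule is a valid lower bound on the true minimax value $v(n_i)$ of the subtree rooted at $n_i$. Given these, the $\epsilon_1$-pruning condition $\alpha(n_i) \geq U - \epsilon_1$ guarantees $v(n_i) \geq U - \epsilon_1$ at the moment of pruning. I would then split into two cases. If no pruned node ever lies in the reachable set of an optimal policy, then some optimal policy survives in the pruned tree and $J_{2k}^{\epsilon_1} \leq J_{2k}^\ast$ trivially. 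Otherwise, fix an optimal policy $\sigma$ and a pruned node $n^\ast$ reachable under $\sigma$; let $U^\ast$ be the value of $U$ at the moment $n^\ast$ was pruned. The dynamic-programming principle for minimax gives $J_{2k}^\ast \geq v(n^\ast) \geq \alpha(n^\ast) \geq U^\ast - \epsilon_1$, hence $U^\ast \leq J_{2k}^\ast + \epsilon_1$. Combining with $J_{2k}^{\epsilon_1} \leq U^\ast$ (from the monotonicity of $U$) gives $J_{2k}^{\epsilon_1} \leq J_{2k}^\ast + \epsilon_1$, as required.

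The main obstacle I expect is formally verifying the two bookkeeping invariants (i) and (ii) for the specific depth-first traversal and update rules in Algorithm~\ref{minimax}. These are standard for textbook $\alpha$--$\beta$ pruning on adversarial game trees, but a careful re-derivation is warranted here because only control branches (never measurement branches) may be pruned in our setting, which slightly changes the usual bookkeeping for the running bounds. The subclaim that an optimal policy $\sigma$ satisfies $J_{2k}^\ast \geq v(n^\ast)$ at every reachable $n^\ast$ is a short consequence of the Bellman-style recursion: the restriction of $\sigma$ to the subtree at $n^\ast$ is one feasible subpolicy, so its value is at least $v(n^\ast)$, and this value in turn is bounded above by the global max over $\sigma$'s reachable leaves, which is $J_{2k}^\ast$. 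Once this invariant lemma is in place, the two-case inequality chain above completes the proof without further calculation.
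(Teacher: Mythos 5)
Your argument is correct and takes essentially the same route as the paper's own justification: the pruning test $\alpha(n^\ast)\ge U-\epsilon_1$ applied to a pruned node on the optimal policy, together with $\alpha(n^\ast)\le v(n^\ast)\le J_{2k}^\ast$ and the monotone running bound $U$ that equals $J_{2k}^{\epsilon_1}$ at termination, yields $J_{2k}^{\epsilon_1}\le J_{2k}^\ast+\epsilon_1$, while the lower bound follows because pruning only discards control options available to the minimizer. Your write-up is in fact more detailed than the paper's, which gives only a two-sentence sketch of this chain in the main text (the appendix proves Theorems 2 and 4 but does not contain the promised full proof of this one).
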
 

The proof follows directly from the fact that if a node on the optimal policy, say $n_i$ is pruned away, then the alpha value at $n_i$ is at most the alpha value of some other node, say $n_j$, that is present in the tree minus $\epsilon_1$. The alpha value of $n_j$ cannot be less than the value returned by the $\epsilon_1$ algorithm. The full proof is given in the 
appendix. 
The bound for $\epsilon_2$-algebraic redundancy pruning is more complicated.

\begin{theorem}[$\epsilon_2$ State-dependent Algebraic Redundancy] Let $J_{2k}^\ast=\Tr(\hat{\Sigma}_{2k}^\ast)$ be the optimal minimax value returned by the full enumeration tree of $2k$ levels. If $J_{2k}^{\epsilon_2}=\Tr(\hat{\Sigma}_{2k}^{\epsilon_2})$ is the value returned by the $\epsilon_2$--algebraic redundancy pruning algorithm, then
$$
0\leq  J_{2k}^{\epsilon_2}-J_{2k}^\ast \leq B^{\epsilon_2}$$
where,
\begin{align*}
&B^{\epsilon_2} =\\
&\Tr\left\{  \sum^k_{j=0} \left[ \prod^j_{i=k-1}\left( F_i(\Sigma) \Phi_{2i}(\Sigma)\right)  \prod^{k-1}_{i=j}\left(  F_i(\Sigma) \Phi_{2i}(\Sigma)\right)^T \right]\epsilon_2 
 \right\}
 \end{align*}
where, $F_i(\Sigma)=C-CK_i(\Sigma)H_i$ and $K_i(\Sigma)$ is the Kalman gain given by $K_i(\Sigma)=\Sigma H_i^T(H_i\Sigma H^T_i+\Sigma_{w})^{-1}$, and $\Phi_{2k}(\cdot)$ is the application of the Riccati equation $\rho(\cdot)$, over $k$ measurement steps: 
$$\Phi_{2k}(\cdot) = \underbrace{\rho_{2(k-1)}(\rho_{2(k-2)}( \dots \rho_0(\cdot)))}_{\text{k  steps  }\rho(\cdot)}. $$
\end{theorem}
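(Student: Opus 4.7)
The plan is to separate the bound into the trivial lower direction and the nontrivial upper direction. The inequality $J_{2k}^{\ast}\le J_{2k}^{\epsilon_2}$ is immediate: the $\epsilon_2$--pruned algorithm only removes subtrees from the full minimax tree, so it optimizes over a strict subset of policies, and its min-max value cannot undercut the full-tree optimum. The substantive work is the upper bound, which I will obtain by (i) localizing the damage caused by a single pruning event, (ii) applying a Kalman-Riccati sensitivity inequality to propagate that damage to level $2k$, and (iii) summing over the at most $k$ pruning events that can lie on the optimal policy branch.

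For step (i), suppose at level $2j$ a node $A=(X_r^A,\hat X_o^A,\Sigma_j^A)$ is $\epsilon_2$--redundant with respect to some set $\mathcal H$. Applying the exact Algebraic Redundancy theorem (Theorem~\ref{thm:main}) to the \emph{inflated} node $\tilde A=(X_r^A,\hat X_o^A,\Sigma_j^A+\epsilon_2 I)$ shows that $\tilde A$ is genuinely dominated by some $B\in\mathcal H$. Conditions (1) and (2) of Theorem~\ref{thm:main} guarantee that $B$ shares the robot state of $A$ and that the least common ancestor is a control node, so the adversary cannot force a different robot trajectory afterwards; consequently the linearized measurement matrices $H_t$ and dynamics matrices $C_t$ along the continuation coincide for $A$ and $B$. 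The cost of replacing $A$'s subtree by $B$'s subtree in the optimal policy is therefore at most the cost of propagating the perturbation $\Sigma_j^A\mapsto\Sigma_j^A+\epsilon_2 I$ through the remaining Riccati updates.

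For step (ii), I will use the standard Kalman-filter sensitivity bound: writing the Riccati map as $\rho_t(\Sigma)=F_t(\Sigma)\Sigma F_t(\Sigma)^T + CK_t(\Sigma)\Sigma_w K_t(\Sigma)^T C^T + \Sigma_v$ with $F_t(\Sigma)=C-CK_t(\Sigma)H_t$, a direct calculation combined with monotonicity of $\rho_t$ in its covariance argument gives
\begin{equation*}
\rho_t(\Sigma+\Delta)\ \preceq\ \rho_t(\Sigma) + F_t(\Sigma)\,\Delta\,F_t(\Sigma)^T\qquad(\Delta\succeq 0).
\end{equation*}
Iterating this inequality over the $k-j$ measurement-and-propagation stages that separate level $2j$ from level $2k$ telescopes into the product $\prod_{i=j}^{k-1} F_i(\Sigma)\Phi_{2i}(\Sigma)$ appearing in $B^{\epsilon_2}$, acting on $\epsilon_2 I$ from the left, with its transpose on the right.

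For step (iii), I will observe that along the optimal policy branch at most one pruning event can occur at each of the $k$ levels, and each contributes an independent perturbation of at most $\epsilon_2 I$. Summing the level-$j$ contribution $\prod_{i=j}^{k-1}F_i\Phi_{2i}\cdot\epsilon_2\cdot\prod_{i=k-1}^{j}(F_i\Phi_{2i})^T$ over $j=0,\dots,k$ and taking the trace yields exactly $B^{\epsilon_2}$. The main obstacle is justifying the sensitivity inequality uniformly across the substituted branch: because $F_t$ itself depends on $\Sigma$, one must argue that using $F_t$ evaluated at the unperturbed covariance gives an upper bound, which follows from the concavity of the Riccati map in $\Sigma$ (or equivalently from the fact that the actual Kalman gain minimizes the posterior covariance, so any other gain — including the one obtained at the unperturbed $\Sigma$ — yields an over-estimate). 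Once this monotone concavity argument is in place, the rest of the proof is a matter of tracking indices through the telescoping product.
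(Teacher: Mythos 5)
Your proposal is correct and follows essentially the same route as the paper's own proof: the trivial lower bound, the per-level perturbation $\Sigma\mapsto\Sigma+\epsilon_2 I$ at a pruning event, the Riccati sensitivity inequality $\rho_t(\Sigma+\Delta)\preceq\rho_t(\Sigma)+F_t(\Sigma)\Delta F_t(\Sigma)^T$ (which the paper simply cites from Vitus et al., while you sketch its suboptimal-gain justification), telescoped through $\Phi_{2k}$ and summed over the at most $k+1$ levels to give $B^{\epsilon_2}$. Your explicit appeal to Theorem~\ref{thm:main} on the inflated node to exhibit the surviving dominating branch is a slightly more careful rendering of a step the paper leaves implicit, but it is not a different argument.
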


By combining the two results, we get
$$0\leq  J_{2k}^{\epsilon_1,\epsilon_2}-J_{2k}^\ast \leq \max \left\lbrace \epsilon_1,B^{\epsilon_2} \right\rbrace.$$


\section{Simulations} \label{sec:sims}
In this section, we present results from simulations to evaluate our pruning techniques. We carry out three types of evaluations. First, we investigate the savings of our algorithm by comparing the number of nodes in the pruned minimax tree and the full enumeration tree. Then, we study the effect of varying the $\epsilon_1$ and $\epsilon_2$ parameters on the number of nodes. Finally, we use the control policy given by our algorithm and execute it by drawing actual measurements from a random distribution. This represents a realistic scenario where the measurements are not necessarily adversarial. We demonstrate how our strategy can be used in such a case, and compare the average case performance with the worst-case performance.



In all simulations, the robot can choose from four actions:
 $$\mathcal{U}=\left\{\left[ +e, 0\right]^T ,\left[ -e, 0\right]^T,\left[ 0, +e\right]^T, \left[0, -e\right]^T \right\}$$
 where $e$ is a constant.  $$ X_r(t+1)=X_r(t)+u(t)$$
 We build the tree using five candidate measurements at each step: $z(t)=\{z_1(t),z_2(t),\cdots, z_5(t) \}$. The five values are randomly generated with Gaussian noise.
 
\subsection{Comparing the Number of Nodes}

\begin{figure}[!htb]
  \centering
  \includegraphics[width=7cm]{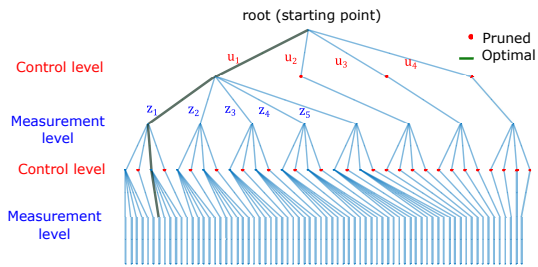}
  \caption{A five level minimax tree with pruning (189 nodes) and full enumeration (505 nodes).}
  \label{Minimax5result}
\end{figure}

\begin{figure}[!htb]
  \centering
  \includegraphics[height=5cm]{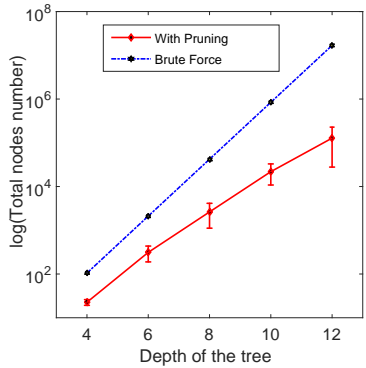}
  \caption{Comparison of the number of total nodes generated for minimax tree. Note the $\log$ scale.}
  \label{Number_of_nodes_Time_step3}
\end{figure}

Fig.~\ref{Minimax5result} and Fig.~\ref{Number_of_nodes_Time_step3} shows the number of nodes left in the minimax tree after pruning as compared to a full enumeration tree. We prune a node by comparing it to the nodes already explored. More nodes will be pruned if initial nodes encountered are ``closer'' to the optimal policy. For instance, if the first set of nodes explored happen to be control laws that drive the robot close to the target, then we expect the nodes encountered later will be pruned closer to the root. To provide a fair assessment, we generate the search trees for various true positions of the target and report the average and standard deviation of the number of nodes in Fig.~\ref{Number_of_nodes_Time_step3}.

Fig.~\ref{Number_of_nodes_Time_step3} shows that our algorithm prunes orders of magnitudes of nodes from the full enumeration tree. For a tree with depth 13, it takes $8.08\times 10^7$ to enumerate all nodes but the same optimal solution can be computed using $4.36\times 10^5$ nodes with our pruning strategy.
 \begin{figure}[!htb]  
  \centering
  \includegraphics[width=0.8\columnwidth]{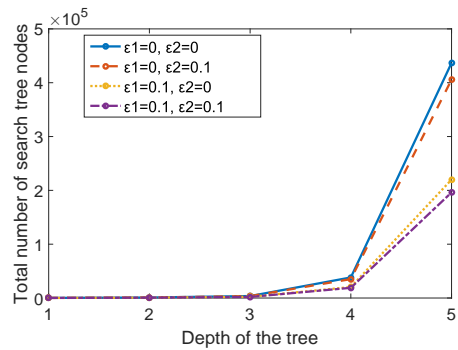}
  \caption{Effect of the $\epsilon_1$ and $\epsilon_2$ relaxation parameters on the number of nodes in the search tree. The baseline case is the optimal solution with alpha pruning and algebraic redundancy with both parameters set to zero.}
\label{Number_of_nodes_with_pruning}
\end{figure} 

Even though our algorithm prunes a large percentage of the nodes, the number of nodes still grows exponentially. By sacrificing optimality, we can prune even more nodes. We evaluate this by varying $\epsilon_1$ and $\epsilon_2$ individually first, and then jointly. As shown in Fig.~\ref{Number_of_nodes_with_pruning},  $\epsilon_1$--alpha pruning is relatively better at reducing the complexity of the minmax tree. This is intuitive because $\epsilon_1$-alpha pruning condition compares nearly every pair of nodes at the same depth. $\epsilon_2$-algebraic redundancy pruning, on the other hand, requires more conditions (same robot state) to be satisfied. Nevertheless, Fig.~\ref{Number_of_nodes_with_pruning} shows that by sacrificing optimality, the number of nodes can be substantially reduced.
  
\begin{figure*}
  \centering
  \includegraphics[height=4.5cm]{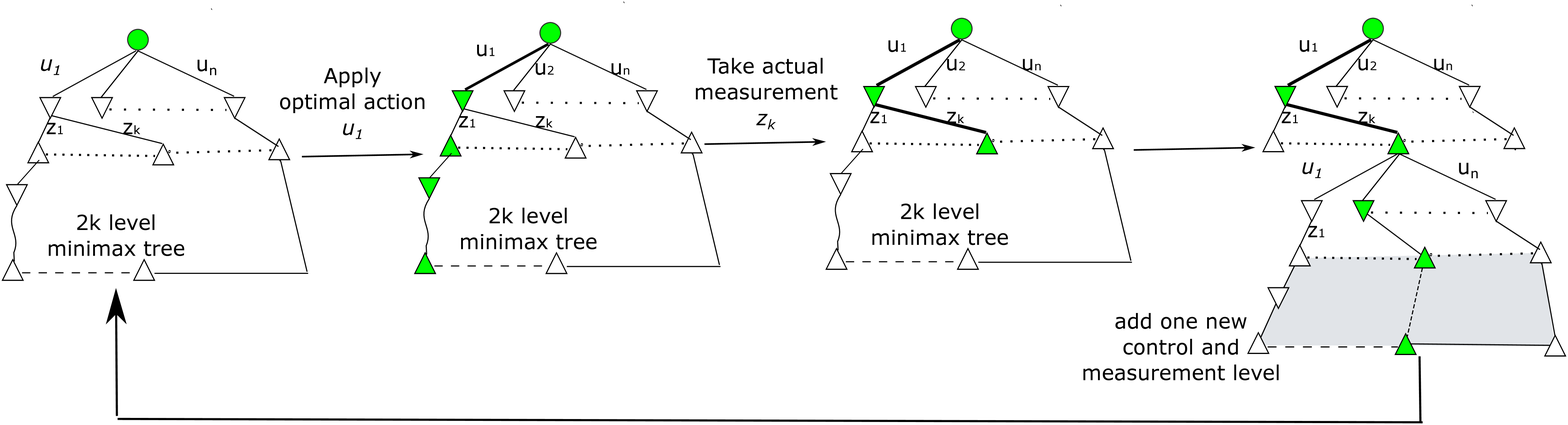}
  \caption{Online measurement update with a minmax tree. The actual measurement, $z_k$ may not correspond to a measurement node in the tree. In such a case, we choose the ``closest'' measurement in the tree.}
  \label{online}
\end{figure*}

\subsection{Online Execution of the Search Tree}

So far, we have discussed the problem of building the minimax tree. Once the tree is built, the robot can execute the optimal policy. At the root node, the robot executes the first control action along the optimal minmax path found. Next, the robot obtains a measurement. This measurement may not correspond to the worst-case measurement. Furthermore, the actual value of the measurement may not even be in the $k$ candidate measurements in the tree. The updated target estimate may not correspond to a node in the tree. Instead, we compute the distance between the actual measurement and find the closest match in the candidate set. The corresponding child node is now the new root node of the tree and the optimal policy starting at that node is executed, iteratively. Fig.~\ref{online} shows the execution in a simple instance.


\section{Conclusion} \label{sec:conc}

We investigated the problem of devising closed-loop control policies for  \changed{state-dependent} target tracking. Unlike \changed{state-independent} tracking, the value of a candidate control law in \changed{state-dependent} target tracking is a function of the history of measurements obtained. Consequently, planning over a horizon requires taking into account possible measurement values. 
In this paper, we focused on minimizing the worst-case uncertainty. Our solution consisted of building a $\min\max$ search tree to obtain the control policy. A full enumeration tree has size exponential in the number of measurements, control laws, and the planning horizon. Instead, we exploited the structural properties of \changed{Kalman filter} to yield a tree with significantly less number of possible nodes without sacrificing the optimality guarantees. We also showed how two parameters, $\epsilon_1$ and $\epsilon_2$, can be used to yield even more computational savings at the expense of optimality. 

One disadvantage of the generalization is the need to discretize the set of possible future measurements. Our immediate future work is to bound the suboptimality as a function of the number of discrete samples chosen to represent the continuous set of future measurements. Once a bound is obtained, the user may choose the correct trade-off between the computation time and the solution quality desired. A second avenue of future work focuses on extending these results to multi-robot, multi-target scenarios. Our prior work~\cite{tokekar2014multi} has shown a greedy assignment of robot trajectories to targets yield provably approximate solutions for one-step planning. We will extend this to planning over a finite horizon using the results presented in this paper.

\bibliographystyle{IEEEtran}
\bibliography{IEEEabrv,main,refs}

\begin{appendix}
\section{Proof of Theorem \ref{Monotonicity}}
\begin{theorem}\label{Monotonicity}(Monotonicity of state dependent Riccati equation )

If two nodes of a linear stochastic system satisfy the condition:
$H\Sigma_{t}^AH^T+S^A \succeq H\Sigma_{t}^BH^T+ S^B$
and $\Sigma_{t}^A \succeq \Sigma_{t}^B$, then after apply one step Riccati map, we have:
$$\rho(\Sigma_{t}^A)\succeq \rho(\Sigma_{t}^B)$$
where, 
$$0\le S^A\le a, 0\le S^B\le a$$
\begin{align}
\rho(\Sigma_k)=&C_k\Sigma_kC_k^T-C_k\hat{\Sigma}_kH_{k}^T(H_{k}\hat{\Sigma}_kH_{k}^T+S^A)^{-1}H_{k}\hat{\Sigma}_kC_k^T\notag\\
&+\Sigma_v  \label{eq6}
\end{align}
\end{theorem}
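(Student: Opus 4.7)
The plan is to pass to the information form of the Kalman posterior via the Woodbury identity and reduce the desired Loewner inequality on $\rho(\cdot)$ to a simpler inequality on inverse covariances. Specifically, I would first use the matrix-inversion identity
$$\Sigma - \Sigma H^T(H\Sigma H^T+S)^{-1}H\Sigma \;=\; \bigl(\Sigma^{-1}+H^T S^{-1}H\bigr)^{-1}$$
to rewrite the Riccati map as $\rho(\Sigma) = C\bigl(\Sigma^{-1}+H^T S^{-1}H\bigr)^{-1}C^T + \Sigma_v$. Since $X\mapsto CXC^T$ and $X\mapsto X+\Sigma_v$ both preserve the order $\succeq$, proving $\rho(\Sigma^A)\succeq\rho(\Sigma^B)$ reduces to showing $(\Sigma^{A,-1}+H^T(S^A)^{-1}H)^{-1}\succeq(\Sigma^{B,-1}+H^T(S^B)^{-1}H)^{-1}$. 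By antitonicity of matrix inversion on the positive-definite cone, this is in turn equivalent to the ``information inequality''
$$\Sigma^{A,-1}+H^T(S^A)^{-1}H \;\preceq\; \Sigma^{B,-1}+H^T(S^B)^{-1}H. \qquad (\star)$$

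Next I would split $(\star)$ into two pieces. The hypothesis $\Sigma^A\succeq\Sigma^B$ yields $\Sigma^{A,-1}\preceq\Sigma^{B,-1}$ immediately by antitonicity of inversion, which takes care of the first summand. To control the measurement-information terms $H^T(S^A)^{-1}H$ versus $H^T(S^B)^{-1}H$, I would combine the innovation hypothesis $H\Sigma^A H^T+S^A\succeq H\Sigma^B H^T+S^B$ — algebraically rearranged as $S^A-S^B\succeq H(\Sigma^B-\Sigma^A)H^T$ — with the boundedness assumption $0\le S^A,S^B\le a$. The role of the upper bound $a$ is to keep $S^{-1}$ uniformly well-behaved so that the operator-monotonicity of $X\mapsto -X^{-1}$ on $(0,a]$ applies and lets me transport the comparison from $S$-space to $S^{-1}$-space.

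The step I expect to be the main obstacle is exactly this last one: the innovation hypothesis couples prior covariance and noise through $H$, so one cannot simply read off $S^A\succeq S^B$ from it. My plan is to isolate the noise contribution by subtracting the $H\Sigma H^T$ terms, absorb the residual $H(\Sigma^B-\Sigma^A)H^T$ into the $\Sigma^{-1}$ part of $(\star)$ (where it points in the helpful direction because $\Sigma^A\succeq\Sigma^B$), and then convert the leftover $S$-inequality into the corresponding inverse-inequality using the identity $(S^B)^{-1}-(S^A)^{-1} = (S^A)^{-1}(S^A-S^B)(S^B)^{-1}$ together with the uniform bound $a$. Once $(\star)$ is established, reversing the chain of reductions — invert, conjugate by $C$, add $\Sigma_v$ — yields $\rho(\Sigma^A)\succeq\rho(\Sigma^B)$ and completes the proof.
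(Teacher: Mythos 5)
Your Woodbury reduction is valid as far as it goes: $\rho(\Sigma)=C(\Sigma^{-1}+H^TS^{-1}H)^{-1}C^T+\Sigma_v$, and $(\star)$ would indeed imply the conclusion. But the step you yourself flag as the main obstacle is a genuine gap, and under the hypotheses as stated it cannot be closed, because $(\star)$ is simply not a consequence of $\Sigma_t^A\succeq\Sigma_t^B$, $H\Sigma_t^AH^T+S^A\succeq H\Sigma_t^BH^T+S^B$ and $0\preceq S^A,S^B\preceq aI$. Scalar counterexample: $H=C=1$, $\Sigma_t^A=10$, $S^A=0.1$, $\Sigma_t^B=1$, $S^B=9$, $a=9$. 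Both hypotheses hold ($10\ge 1$ and $10.1\ge 10$), yet $\Sigma^{A,-1}+(S^A)^{-1}=10.1\gg 1.11=\Sigma^{B,-1}+(S^B)^{-1}$, so $(\star)$ fails; worse, the posteriors order the wrong way, $\rho(\Sigma^A)-\Sigma_v=(0.1+10)^{-1}\approx 0.099 < 0.9=\rho(\Sigma^B)-\Sigma_v$, so no chain of reductions can rescue the claim from these hypotheses alone. The proposed ``absorption'' cannot work structurally: rearranging the innovation hypothesis gives only $S^A-S^B\succeq H(\Sigma^B-\Sigma^A)H^T$, a lower bound by a negative semidefinite quantity, which yields no control whatsoever on $(S^A)^{-1}-(S^B)^{-1}$; and when both priors are large, the surplus $\Sigma^{B,-1}-\Sigma^{A,-1}$ you hope to spend is arbitrarily small while the measurement-information deficit caused by $S^A\ll S^B$ is arbitrarily large. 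The uniform bound $a$ does not help either: antitonicity of inversion needs no upper bound, and $a$ provides no comparison between $S^A$ and $S^B$.

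Your route is also genuinely different from the paper's, which never passes to information form. The paper works with the covariance-form identity $\rho(\Sigma^A)-\rho(\Sigma^B)=F(\Sigma^A)(\Sigma^A-\Sigma^B)F(\Sigma^A)^T+K(\Sigma^A)(S^A-S^B)K(\Sigma^A)^T+(K(\Sigma^B)-K(\Sigma^A))(H\Sigma^BH^T+S^B)(K(\Sigma^B)-K(\Sigma^A))^T$ and then uses the innovation hypothesis in the form $K(\Sigma^A)\left[(H\Sigma^AH^T+S^A)-(H\Sigma^BH^T+S^B)\right]K(\Sigma^A)^T\succeq 0$, after an additional bounding step (the ``$M$'' inequality trading the $F(\Sigma^A)(\Sigma^A-\Sigma^B)F(\Sigma^A)^T$ term for $K(\Sigma^A)H(\Sigma^A-\Sigma^B)H^TK(\Sigma^A)^T$); note also that where the lemma is actually invoked (Theorem 2) the hypothesis carries the extra worst-case slack $KaI$, which is the slack your argument never exploits. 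If you want to keep the information-form approach, the honest repair is to strengthen the assumption on the noises: with $S^A\succeq S^B$ (or identical noise), $(\star)$ follows in one line from antitonicity of inversion applied separately to $\Sigma$ and $S$, and your proof closes; trying to extract a pointwise comparison of $(S^A)^{-1}$ versus $(S^B)^{-1}$ from the innovation inequality, as planned, is exactly the step that fails.
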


\begin{proof} 
\begin{equation}
\begin{split}
&L(\Sigma_{k}^A)- L(\Sigma_{k}^B)\\
=&C_k\Sigma_{k}^AC_k^T-C_k\Sigma_{k}^AH^T\left(H\Sigma_{k}^AH^T+S^A\right)^{-1}H\Sigma_{k}^AC_k^T\\
&-  C_k\Sigma_{k}^BC_k^T+C_k\Sigma_{k}^BH^T\left(H\Sigma_{k}^BH^T+ S^B\right)^{-1}H\Sigma_{k}^BC_k^T  
\end{split}
\end{equation}

We define:
$$K(\Sigma):=-F\Sigma H^T(H\Sigma H^T+S)^{-1}$$
$$F(\Sigma):=F-(F\Sigma H^T)(H\Sigma H^T+S)^{-1}$$
Note that： $F(\Sigma)=F+K(\Sigma)H$, and \\
$$K(\Sigma)(F\Sigma H^T)^T=-K(\Sigma)(H\Sigma H^T+S)K(\Sigma)^T$$

Then,
\begin{equation}
\begin{split}
L&(\Sigma^A_{t})-L(\Sigma^B_{t})-F(\Sigma^A_{t})(\Sigma^A_{t}-\Sigma^B_{t})F^T(\Sigma^A_{t})\\
=&K(\Sigma^A_{t})F\Sigma^A_{t}H^T-K(\Sigma^B_{t})F\Sigma^B_{t}H^T\\
&-K(\Sigma^A_{t})H(\Sigma^A_{t}-\Sigma^B_{t})F^T-F(\Sigma^A_{t}-\Sigma^B_{t})H^TK^T(\Sigma^A_{t})\\
&-K(\Sigma^A_{t})H(\Sigma^A_{t}-\Sigma^B_{t})H^TK(\Sigma^A_{t})^T\\
=&K(\Sigma^A_{t})F\Sigma^A_{t}H^T-K(\Sigma^B_{t})F\Sigma^B_{t}H^T\\
&-K(\Sigma^A_{t})[H\Sigma^A_{t}F^T-H\Sigma^B_{t}F^T]\\
&-[F\Sigma^A_{t}H^T-F\Sigma^B_{t}H^T]K^T(\Sigma^A_{t})\\
&-K(\Sigma^A_{t})H(\Sigma^A_{t}-\Sigma^B_{t})H^TK^T(\Sigma^A_{t})\\
=&-K(\Sigma^B_{t})(F\Sigma^B_{t}H^T)^T+K(\Sigma^A_{t})(F\Sigma^B_{t}H^T)\\
&+(F\Sigma^B_{t}H^T)^TK^T(\Sigma^A_{t})\\
&+K(\Sigma^A_{t})(H\Sigma^B_{t}H^T+S^A)K^T(\Sigma^A_{t})\\
=&K(\Sigma^B_{t})(H\Sigma^B_{t}H^T+S^B)K^T(\Sigma^B_{t})\\
&+K(\Sigma^A_{t})(H\Sigma^B_{t}H^T+S^A)K^T(\Sigma^A_{t})\\
&-K(\Sigma^A_{t})(H\Sigma^B_{t}H^T+S^B)K^T(\Sigma^B_{t})\\
&-K(\Sigma^B_{t})(H\Sigma^B_{t}H^T+S^B)K^T(\Sigma^A_{t})\\
=&(K(\Sigma^B_{t})-K(\Sigma^A_{t}))(H\Sigma^B_{t}H^T+S^B)(K(\Sigma^B_{t})-K(\Sigma^A_{t}))^T\\
&+K(\Sigma^A_{t})(S^A-S^B)K^T(\Sigma^A_{t})
\end{split}
\end{equation}
\end{proof}

Define 
\begin{equation}\label{maxM}
\begin{split}
&M=\\
&\max \left((F+K(\Sigma^A_{t})H)(F+K(\Sigma^A_{t})H)^T, K(\Sigma^A_{t})K^T(\Sigma^A_{t}) \right)
\end{split}
\end{equation}
So, we have,

\begin{equation}
\begin{split}
&L(\Sigma^A_{t})-L(\Sigma^B_{t})\\
=&F(\Sigma^A_{t})(\Sigma^A_{t}-\Sigma^B_{t})F^T(\Sigma^A_{t})+K(\Sigma^A_{t})(S^A-S^B)K^T(\Sigma^A_{t})\\
&+(K(\Sigma^B_{t})-K(\Sigma^A_{t}))(H\Sigma^B_{t}H^T+S^B)(K(\Sigma^B_{t})^T-K(\Sigma^A_{t}))^T\\
=&(F+K(\Sigma^A_{t})H)(\Sigma^A_{t}-\Sigma^B_{t})(F+K(\Sigma^A_{t})H)^T\\
&+K(\Sigma^A_{t})(S^A-S^B)K^T(\Sigma^A_{t})\\
&+(K(\Sigma^B_{t})-K(\Sigma^A_{t}))(H\Sigma^B_{t}H^T+S^B)(K(\Sigma^B_{t})-K(\Sigma^A_{t}))^T\\
&\text{From [\ref{maxM}], we have}\\
\succeq & K(\Sigma^A_{t})H(\Sigma^A_{t}-\Sigma^B_{t})H^T K^T(\Sigma^A_{t})+K(\Sigma^A_{t})(S^A-S^B)K^T(\Sigma^A_{t})\\
&+(K(\Sigma^B_{t})-K(\Sigma^A_{t}))(H\Sigma^B_{t}H^T+S^B)(K(\Sigma^B_{t})-K(\Sigma^A_{t}))^T\\
= & K(\Sigma^A_{t})\left((H\Sigma^A_{t}H^T+S^A)-(H\Sigma^B_{t}H^T+S^B)\right) K^T(\Sigma^A_{t})\\
&+(K(\Sigma^B_{t})-K(\Sigma^A_{t}))(H\Sigma^B_{t}H^T+S^B)(K(\Sigma^B_{t})-K(\Sigma^A_{t}))^T\\
&\qquad \qquad \text{since $H\Sigma_{t}^AH^T+S^A \succeq H\Sigma_{t}^BH^T+ S^B$  }\\
\succeq & 0
\end{split}
\label{inequaility}
\end{equation}

\section*{ Proof of Theorem 2 }

\begin{proof} We first prove a special case when $\mathcal{H}$ consists of only one node, $B$. That is, we have:
\begin{equation*}
H_t\Sigma_t^AH_t^T \succeq H_t\Sigma_t^BH_t^T+K\cdot aI
\end{equation*}
where, $a=\delta_1^2 +\delta_2^2\mathcal{C}$.

To prove:
$$\text{tr}(\Sigma_{t+K}^A) \ge \text{tr}(\Sigma^B_{t+K}).$$

1) Show that the statement holds for $K = 1$.

When $K = 1$, $H\Sigma^A_{t}H^T \succeq H\Sigma^B_tH^T+aI$. From the Kalman Riccati map,
\begin{equation}
\begin{split}
&\Sigma^A_{t+1}=\rho_i(\Sigma^A_{t},x_r(t),\hat{x}_o(t))\\
=&C\Sigma^A_{t}C^T-C\Sigma^A_{t}H_{t}^T\cdot\\
&\left(H_{t}\Sigma^A_{t}H^T_{t}+\Sigma_{w_i}\left(x_r(t),\hat{x}^A_o(t)\right)\right)^{-1}H_{t}\Sigma^A_{t}C^T+\Sigma_v\\
& \qquad \text{Applying Theorem \ref{Monotonicity}, we have}\\
\succeq& C\Sigma^B_{t}C^T-C\Sigma^B_{t}H_{t}^T\cdot\\
&\left(H_{t}\Sigma^B_{t}H^T_{t}+\Sigma_{w_i}\left(x_r(t),\hat{x}^B_o(t)\right)\right)^{-1}H_{t}\Sigma^B_{t}C^T+\Sigma_v\\
=&\Sigma^B_{t+1}
\end{split}
\end{equation}

2) Inductive step: Show that if the claims holds for $K=M$, then it also holds for $K=M+1$. This can be done as follows:
Assume the claim holds for $K=M$. Let $\Sigma^{B'}(t)=\Sigma^B(t)+a\cdot (H^TH)^{-1}$, based on the condition of $K=M$ we have,
$$\Sigma^A_{t+M} \succeq \Sigma^{B'}_{t+M}$$
that is,
$$\Sigma^A_{t+M} \succeq \Sigma^B_{t+M}+a\cdot (H^TH)^{-1}$$
Similar to the step (1):
$$\Sigma^A_{t+M+1} \succeq \Sigma^B_{t+M+1}$$
Thereby showing that indeed $K=M+1$ holds.

Since both the base case and the inductive step have been performed, by mathematical induction, the statement holds for all natural numbers $n$.

Then, we extend the proof from comparing two nodes to arbitrary $N$ nodes case, if we have
\begin{equation}
\begin{aligned}
H_t\Sigma_t^AH_t^T  \succeq  \sum^{N}_{i=1}\alpha_i \left[ H_t\Sigma_t^iH_t^T + K\left(\delta_1^2 +\delta_2^2\mathcal{C}\right)\right]
\end{aligned}
\end{equation}

Without loss of generality, we assume $\Sigma^{B'}$ is the minimum covariance matrix (in the positive semi-definite sense) among  $\Sigma^1_t,\Sigma^2_t,...,\Sigma^N_t$.
\begin{equation}
\begin{split}
H_t\Sigma_t^AH_t^T  &\succeq  \sum^{N}_{i=1}\alpha_i \left[ H_t\Sigma_t^iH_t^T + K\left(\delta_1^2 +\delta_2^2\mathcal{C}\right)\right]\\
&\succeq  \sum^{N}_{i=1}\alpha_i \left[ H_t\Sigma_t^{B'}H_t^T + K\left(\delta_1^2 +\delta_2^2\mathcal{C}\right)\right]\\
&=H_t\Sigma_t^{B'}H_t^T + K\left(\delta_1^2 +\delta_2^2\mathcal{C}\right)
\end{split}
\end{equation}
Using the result from the induction above, after $K$ minmax tree steps, there always exist a node $B'$, such that,
\begin{equation}
\text{tr}(\Sigma_{t+K}^A) \ge \text{tr}(\Sigma^{B'}_{t+K})
\end{equation}
Therefore, node A can be pruned without reducing the optimality of the minmax tree.
\end{proof}
\section*{ Proof of Theorem 4}
\begin{proof}
For some level $i$, suppose that we prune a node on the optimal policy. We have,
$$\text{tr}(H\left(\Sigma_{2i}^{\epsilon_2}\right) H^T) \le \text{tr}(H\left(\Sigma_{2i}^{*}+\epsilon_2I\right)H^T)$$ 
From~\cite{vitus2012efficient}, we know that $\forall$ $\Sigma, Q \in \mathbb{R}^{n\times n}$ and $\epsilon \geq 0$:
$$\rho_{2i}(\Sigma+\epsilon Q)\preceq \rho_{2i}(\Sigma) + F_i(\Sigma)QF^T_i(\Sigma)\epsilon.$$

Applying to the above equation we get, 
\begin{align*}
\Phi_{2k}&(\Sigma+\epsilon_2Q) = \rho_{2(k-1)}(\Phi_{2(k-1)}(\Sigma+\epsilon_2Q))\\
=&\rho_{2(k-1)}(\rho_{2(k-2)}( \dots \rho_0(\Sigma+\epsilon_2Q)))\\
\preceq& \Phi_{2k}(\Sigma)+ \rho_{2(k-1)}(\rho_{2(k-2)}( \dots \rho_2(F_1(\Sigma)QF^T_1(\Sigma))\\
\vdots\\
=&\Phi_{2k}(\Sigma)+\\ &\left[  \prod^0_{i=k-1}\left( F_i(\Sigma) \Phi_{2i}(\Sigma)\right) Q  \prod^{k-1}_{i=0}\left(  F_i(\Sigma) \Phi_{2i}(\Sigma)\right)^T \right]\epsilon_2\\ &+o(\epsilon_2)\\
\preceq &\Phi_{2k}(\Sigma)+\\ &\left[  \prod^0_{i=k-1}\left( F_i(\Sigma) \Phi_{2i}(\Sigma)\right) Q  \prod^{k-1}_{i=0}\left(  F_i(\Sigma) \Phi_{2i}(\Sigma)\right)^T \right]\epsilon_2
\end{align*}

Let $\left\lbrace \hat{\Sigma}^\ast_i\right\rbrace^k_{i=1} $ be the series of covariance matrices along the optimal minmax trajectory. Suppose that the sequence of covariance matrices along the optimal trajectory returned by $\epsilon_2$--algebraic redundancy pruning algorithm  is  $\left\lbrace \hat{\Sigma}^{\epsilon_2}_i\right\rbrace^k_{i=1} $. We get,
$$\hat{\Sigma}^{\epsilon_2}_i \preceq \hat{\Sigma}^{\ast}_i+\epsilon_2I,\quad \forall i=1,2,\dots,k$$ 

By combining the two results, we obtain the desired bound:
\begin{align*}
&0\leq  J_{2k}^{\epsilon_2}-J_{2k}^\ast = \Tr(\hat{\Sigma}^{\epsilon_2}_k)-\Tr(\hat{\Sigma}^{\ast}_k)\\
& \leq \Tr\left\{  \sum^k_{j=0} \left[ \prod^j_{i=k-1}\left( F_i(\Sigma) \Phi_{2i}(\Sigma)\right) \prod^{k-1}_{i=j}\left(  F_i(\Sigma) \Phi_{2i}(\Sigma)\right)^T \right]\epsilon_2 
\right\} \\
&=B^{\epsilon_2}
\end{align*}
\end{proof}

\end{appendix}

\end{document}